\def\BibTeX{{\rm B\kern-.05em{\sc i\kern-.025em b}\kern-.08em
    T\kern-.1667em\lower.7ex\hbox{E}\kern-.125emX}}
\newcolumntype{L}[1]{>{\raggedright\arraybackslash}p{#1}}
\algnewcommand{\algorithmicor}{\textbf{ or }}
\algnewcommand{\OR}{\algorithmicor}
\crefname{equation}{}{}
\Crefname{equation}{}{}
\crefname{figure}{Fig.}{Fig.}
\newcommand{\ie}{\textit{i}.\textit{e}., }
\newcommand{\eg}{\textit{e}.\textit{g}., }
\begin{document}
\newtheorem{theorem}{Theorem}
\newtheorem{remark}{Remark}
\newtheorem{lemma}{Lemma}
\newtheorem{proposition}{Proposition}
\newtheorem{definition}{Definition}

\title{Provably Safe Tolerance Estimation for Robot Arms via Sum-of-Squares Programming
}

\author{Weiye Zhao, Suqin He, and Changliu Liu
\thanks{* This work is in part supported by ARM Institute and Amazon Research Award.}
\thanks{W. Zhao and C. Liu are with Carnegie Mellon University. Emails: \texttt{weiyezha, cliu6@andrew.cmu.edu}. S. He is with Tsinghua University (work done while at CMU). Email: \tt hesq16@mails.tsinghua.edu.cn.}}

\maketitle

\begin{abstract}
Tolerance estimation problems are prevailing in engineering applications. For example, in modern robotics,  it remains challenging to efficiently estimate joint tolerance, \ie the maximal allowable deviation from a reference robot state such that safety constraints are still satisfied. This paper presented an efficient algorithm to estimate the joint tolerance using sum-of-squares programming. It is theoretically proved that the algorithm provides a tight lower bound of the joint tolerance. Extensive numerical studies demonstrate that the proposed method is computationally efficient and near optimal. The algorithm is implemented in the JTE toolbox and is available at \url{https://github.com/intelligent-control-lab/Sum-of-Square-Safety-Optimization}.

\end{abstract}

\section{INTRODUCTION}

Tolerance estimation problems are prevailing in engineering applications. The problem is to find a mapping from a constraint $f(\bm{x})\geq 0$, a feasible point for that constraint $\bm{x}^r$, and a distance metric (e.g., $l_p$), to the maximum allowable deviation $\lambda$ to that feasible point in the distance metric such that the constraint is still satisfied. Mathematically, the problem $(f,\bm{x}^r,p)\mapsto \lambda$ can be written as the following optimization:
\begin{align}\label{eq: fundemantal problem}
    \max \lambda \text{ s.t. }\forall \|\bm{x} - \bm{x}^r\|_{p} \leq \lambda, f(\bm{x}) \geq 0. 
\end{align}
The problem arises when computing the convex feasible set for a collision avoidance constraint \cite{liu2018convex}; when computing the error tolerance for a given trajectory \cite{WU200754}; when verifying the adversarial bound for a given constraint \cite{OPT-035}; and when verifying regions of attraction for nonlinear systems~\cite{tedrake2010lqr}. We call the constraint in \eqref{eq: fundemantal problem} a \textit{local positiveness constraint}, which requires $f$ to be positive in a bounded neighbourhood of $\bm{x}^r$.

The solution strategies for \eqref{eq: fundemantal problem} differ significantly for different types of constraints $f$. This paper focuses on finding tolerance bounds in the joint space of robot arms with respect to  collision avoidance constraints in the Cartesian space. We call this problem as a \textit{joint tolerance estimation} problem. Hence the function $f$ contains the forward kinematics of the robot arm and the distance computation in the Cartesian space, which is continuous but highly nonlinear and nonconvex. The goal is to find a non-conservative lower bound of \eqref{eq: fundemantal problem}, which can then be used to guide the design of subsequent controllers to safely stabilize the robot arm within the bound. 

While there is limited work on solving \eqref{eq: fundemantal problem} that specifically leverages the features of $f$ for robot arms, there are many generic methods that solve \eqref{eq: fundemantal problem} in general. The two major types of solutions are through adversarial optimization and reachability analysis. Adversarial optimization \cite{Adversarial} solves an equivalent form of \eqref{eq: fundemantal problem}:
\begin{align}\label{eq: fundemantal equivalent form}
    \inf \|\bm{x} - \bm{x}^r\|_{p} \text{ s.t. } f(\bm{x}) < 0.
\end{align}
which finds the smallest possible deviation such that the constraint is about to be violated. Numerically, this equivalent form \eqref{eq: fundemantal equivalent form} is easy to solve since it can be directly fed to existing optimization solvers, while \eqref{eq: fundemantal problem} cannot since it essentially contains infinitely many constraints (i.e., every $\bm{x}$ corresponds to a constraint and there are infinitely many $\bm{x}$ that satisfies $\|\bm{x} - \bm{x}^r\|_{p}$). However, due to local optimality induced by the complexity of the constraint $f$, it is challenging to find a lower bound for \eqref{eq: fundemantal equivalent form}~\cite{bazaraa2013nonlinear}. On the other hand, reachability based methods \cite{Reachability_10.1007/978-3-540-24743-2_10} solves the problem by proposing different $\lambda$, computing the reachable set $\mathcal{R}(\lambda):=\{f(\bm{x})\mid \|\bm{x} - \bm{x}^r\|_{p}\leq \lambda\}$, and then performing binary search to find the maximal $\lambda$ such that all points in $\mathcal{R}(\lambda)$ satisfies $f(\bm{x})\geq 0$. However, exact reachable sets are expensive to compute, while under-approximated reachable sets will not provide a lower bound of \eqref{eq: fundemantal problem} and over-approximated reachable sets may render the problem overly conservative.   

This paper proposes to directly deal with the infinite cardinality of constraints by leveraging techniques from sum-of-squares programming (SOSP)~\cite{prajna2002introducing}. Although SOSP is able to deal with infinitely many constraints, the standard SOSP formulation only deals with \textit{global positiveness constraints} instead of \textit{local positiveness constraints}~\cite{papachristodoulou2013sostools}. A \textit{global positiveness constraint} can be written as $\forall\bm{x},f(\bm{x})\geq0$, which is more restrictive than a \textit{local positiveness constraint}. To derive a non conservative solution of \eqref{eq: fundemantal problem}, we then resort to the \textit{Positivstellensatz} condition, \ie the theory behind SOSP, to turn the problem with \textit{local positiveness constraints} into an ordinary nonlinear program. The nonlinear program can then be easily solved by off-the-shelf nonlinear programming solvers. It is formally proved that the nonlinear program provides a lower bound of \eqref{eq: fundemantal problem}.   

The contributions of this paper include:
\begin{itemize}
    \item We formulate the joint tolerance estimation problem with \textit{local positiveness constraints} into a sum-of-squares program and then an ordinary nonlinear program by using the \textit{Positivstellensatz} condition.
    \item We develop an efficient algorithm for joint tolerance estimation which is able to find tight lower bounds of \eqref{eq: fundemantal problem} when $p=\infty$. The algorithm is implemented in the JTE toolbox and is available at \url{https://github.com/intelligent-control-lab/Sum-of-Square-Safety-Optimization}. 
\end{itemize}

\section{Problem Formulation}
For an $n$ degree-of-freedom (DOF) robot manipulator, define $x_i$ as the angle of joint $i$. The state of the robot is defined as $\bm{x} = [x_1; x_2; \ldots; x_n]$. Define a distance function $f(\bm{x})$ to compute the closest distance between the robot at configuration $\bm{x}$ and the environmental obstacles $\mathcal{O}\subset\mathbb{R}^3$ in Cartesian space, \ie $f(\bm{x}) = \min_{\bm{p}\in \mathcal{C}(\bm{x}),\bm{o}\in \mathcal{O}} d(\bm{p}, \bm{o})$ where $d$ measures the distance between two Cartesian points. The notation $\mathcal{C}(\bm{x})$ refers to the area occupied by the robot in the Cartesian space at configuration $\bm{x}$, which needs to be computed using forward kinematics~\cite{merlet2004solving}. Forward kinematics involve  trigonometric terms, \ie $\sin(\cdot), \cos(\cdot)$. 
Hence, $f(\bm{x})$ is highly nonlinear and is not polynomial. 

Suppose these is a feasible reference $\bm{x}^r$, we want to find the maximum tolerance bound $\lambda$ such that if the deviation from the actual robot state $\bm{x}$ to $\bm{x}^r$ does not exceed $\lambda$, the robot is still safe. The problem can be formulated into \eqref{eq: fundemantal problem}. This paper focuses on $p=\infty$ case. Hence the joint tolerance estimation problem can be posed as
\begin{equation}
    \max \lambda, \text{ s.t. } \forall \|\bm{x} - \bm{x}^r\|_{\infty} \leq \lambda, f(\bm{x}) \geq 0 .
    \label{probori}
\end{equation}

Here we provide an example on a 2-link planar robot arm shown in \cref{fig:1}. The problem is to find the maximum allowable deviation from the reference pose such that the robot does not collide with the wall at $x_{wall}$. In this example, since the first link of the robot cannot reach the wall, we only need to ensure that the end-effector of the robot does not hit the wall. Suppose the length of the two links are both $1$ and $x_i$ is the $i$-th joint angle for the robot in world frame, then the collision avoidance constraint can be written as $f(\bm{x}) = x_{wall} - \cos{x_1} - \cos{x_2}$. The constraint $f(\bm{x})$ and the reference $\bm{x}^r$ are also illustrated in the configuration (state) space. The tolerance is illustrated as the green square in the configuration space.

\begin{figure}
    \centering
    \includegraphics[width=0.9\columnwidth]{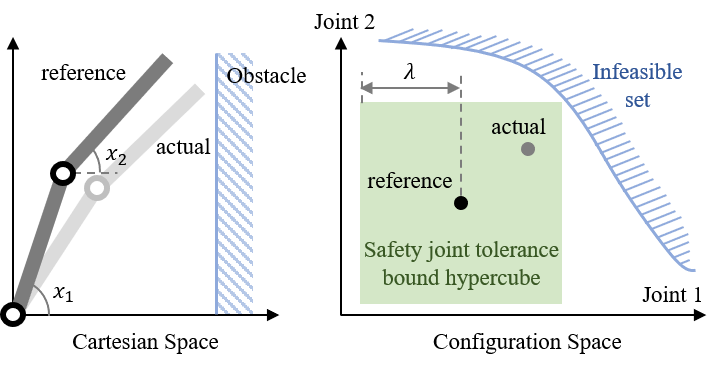}
    \caption{Illustration of joint tolerance estimation for a 2-link robot arm. The actual position of the robot arm may deviate from its reference. We need to compute the maximum allowable deviation such that the robot does not collide with the obstacle.}
    \label{fig:1}
\end{figure}

As mentioned earlier, \eqref{probori} contains infinitely many constraints since every feasible $\bm{x}$ poses an inequality constraint, hence cannot be directly solved by off-the-shelf nonlinear programming solvers. To deal with this issue, we leverage tools for SOSP (to be reviewed in the following section) to repose the problem as a nonlinear program. 
Nonetheless, using these tools requires $f$ to be a polynomial in $\bm{x}$. We can turn $f$ into a polynomial through local Taylor expansion. To control the quality of the Taylor approximation, it is desired to constrain the free variable (\eg $\bm{x}$) in a fixed range instead of a varying range defined by $\lambda$. Hence we define an auxiliary variable $\bm{y} = [y_1,y_2, \ldots, y_n] \in \mathbb{R}^n$ and rewrite \cref{probori} as following: 
\begin{equation}
      \begin{split}
          \max~&  \lambda\\
          \text{s.t. }
                & \forall y_i^2 \leq 1, i = 1,2,\ldots,n,f(\bm{x}^r + \bm{y}\lambda) \geq 0.
      \end{split}
      \label{probtrans}
\end{equation}
In the following discussion, we review the tools from SOSP and show how to use these tools to turn \eqref{probtrans} into an ordinary nonlinear program.

\section{Sum of Squares Related Work}\label{sec:sos}
Optimization problems with \textit{global positiveness constraints} are widely studied and have been successfully applied to many problems, such as synthesizing Lyapunov functions~\cite{tan2004searching}. These problems can be solved by leveraging SOSP~\cite{papachristodoulou2013sostools}. Mathematically, a polynomial $p(\bm{x})$ of degree $2d$ is SOS (denoted as $p(\bm{x})\in\text{SOS}$) if and only if there exists polynomials $e_1(\bm{x}),\ldots,e_k(\bm{x})$ of degree $d$ such that $p(\bm{x}) = \sum_{i=1}^{k}e_k(\bm{x})^2$. In particular, SOSTOOLS~\cite{papachristodoulou2013sostools} implements SOSP to solve the problems with \textit{global positiveness constraints}. 
However, our problem has \textit{local positiveness constraints}, which cannot be directly written into the form that SOSTOOLS accepts. We need to dig into the theory behind SOSP.

To ensure \textit{global positiveness} of a condition, the easiest approach is to show that there does not exist any solution that the condition is violated. The set such that the condition is violated is called a refute set. For example, the refute set of $\forall\bm{x}, f(\bm{x})\geq 0$ is $\{\bm{x}\mid f(\bm{x})< 0\}$. Constructing the refute set and showing it is empty to ensure \textit{global positiveness} is the the core idea behind SOSP~\cite{parrilo2003semidefinite}. To show that the refute set is empty, we need to invoke the equivalence conditions in \textit{Positivstellensatz}~\cite{parrilo2003semidefinite}. 
 Before introducing \textit{Positivstellensatz}, let us first review the definitions of a key concept: ring-theoretic \textit{cone}~\cite{bochnak2013real}.
  \begin{definition}[Cone]
   Given a set $S \subseteq \mathbb{R}[x_1,\ldots,x_n]$, where $\mathbb{R}[x_1,\ldots,x_n]$ is a set of polynomials with $[x_1,\ldots,x_n]$ as variables. The general cone $\Gamma^*$ of $S$ is defined as a subset of $\mathbb{R}[x_1,\ldots,x_n]$ satisfying the following properties: 
  \begin{itemize}
      \item $a,b \in \Gamma^* \Rightarrow a+b \in \Gamma^*$
      \item $a,b \in \Gamma^* \Rightarrow a\cdot b \in \Gamma^*$
      \item $a\in\mathbb{R}[x_1,\ldots,x_n] \Rightarrow a^2 \in \Gamma^*$
  \end{itemize}
  \end{definition}
  
  \begin{definition}[Ring-theoretic cone]\label{prop}
  For a polynomial set $S = {\gamma_1,\ldots,\gamma_s} \subseteq \mathbb{R}[x_1,\ldots,x_n]$, the associated ring-theoretic \textit{cone} can be expressed as: 
  \begin{equation}
      \Gamma = \{ p_0 + p_1\gamma_1 + \ldots + p_s\gamma_s + p_{12}\gamma_1\gamma_2 +\ldots + p_{12\ldots s}\gamma_1\ldots\gamma_s\}
  \end{equation}
  where $\Gamma \subseteq \Gamma^*$ and $p_0, \ldots,p_{12\ldots s}\in\text{SOS}$.
  \end{definition}
  
  Based on the ring-theoretic \textit{cone}, \textit{Positivstellensatz} is specified in the following theorem.
  \begin{theorem}[\textit{Positivstellensatz}] \label{theo:posi}
   Let $(\gamma_j)_{j=1,\ldots,s}$, $(\psi_k)_{k=1,\ldots,t}$, $(\phi_l)_{l=1,\ldots,u}$ be finite families of polynomials in $\mathbb{R}[x_1,\ldots,x_n]$. Denote by $\Gamma$ the cone generated by $(\gamma_j)_{j=1,\ldots,s}$, $\Xi$ the multiplicative monoid~\cite{bochnak2013real} generated by $(\psi_k)_{k=1,\ldots.,t}$, and $\Phi$ the ideal~\cite{bochnak2013real} generated by $(\phi_l)_{l=1,\ldots,u}$. Then, the following properties are equivalent:
  \begin{enumerate}
      \item The following set is empty
          \begin{align}
          \begin{cases}
            x \in \mathbb{R}^n \Biggr|\begin{array}{ll}
                                    \gamma_j(\bm{x})\geq 0,j = 1,..,s\\
                                    \psi_k(\bm{x})\neq 0, k = 1,\ldots,t\\
                                    \phi_l(\bm{x})=0, l = 1,\ldots,u
                                    \end{array}
          \end{cases}.
          \end{align}
      \item There exist  $\gamma \in \Gamma, \psi\in \Xi, \phi \in \Phi$ such that 
      \begin{align}
          \gamma+\psi^2+\phi = 0,
          \label{eq:pos}
      \end{align}
  \end{enumerate}
  where $\psi = 1$ if $\psi_k(\bm{x})$ are empty, and $\phi = 0$ if $\phi_l$ is empty.
  \end{theorem}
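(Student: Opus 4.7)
The plan is to establish the two directions separately. The implication (2) $\Rightarrow$ (1) is the easy direction, which I would prove by contradiction. Suppose there exists $\bm{x}^*$ in the set of item (1), so that $\gamma_j(\bm{x}^*)\geq 0$ for every $j$, $\psi_k(\bm{x}^*)\neq 0$ for every $k$, and $\phi_l(\bm{x}^*)=0$ for every $l$. Every generator of the cone $\Gamma$ is an SOS polynomial multiplied by a product of some $\gamma_j$'s, hence nonnegative at $\bm{x}^*$, so $\gamma(\bm{x}^*)\geq 0$. Every factor of $\psi\in\Xi$ is nonzero at $\bm{x}^*$, so $\psi(\bm{x}^*)^2>0$. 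Every generator of the ideal $\Phi$ vanishes at $\bm{x}^*$, so $\phi(\bm{x}^*)=0$. Adding the three contributions yields $(\gamma+\psi^2+\phi)(\bm{x}^*)>0$, contradicting \eqref{eq:pos}.

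The nontrivial direction (1) $\Rightarrow$ (2) is what gives the theorem its teeth. I would argue by contrapositive using classical ordered-field techniques. Define the set of candidate certificates
\begin{equation}
T \defeq \{\gamma + \psi^2 + \phi \setdef \gamma\in\Gamma,\ \psi\in\Xi,\ \phi\in\Phi\}.
\end{equation}
A direct calculation shows that $T$ is closed under addition and multiplication and contains every square, so $T$ is itself a ring-theoretic cone extending $\Gamma$. Assume no certificate exists; equivalently, $-1$ lies outside the preorder generated by $T$ together with the defining relations. Invoke Zorn's lemma to extend this preorder to a maximal proper cone $P\subset\R[x_1,\ldots,x_n]$. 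A standard result in real algebra then tells us that $P\cap(-P)$ is a prime ideal and that $P$ induces a total order on the quotient field $K$ of $\R[x_1,\ldots,x_n]/(P\cap-P)$, under which the classes of the $\gamma_j$ are nonnegative, the $\psi_k$ are nonzero, and the $\phi_l$ are zero.

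The main obstacle is the final transfer step: converting this abstract ordered-field witness into a genuine real point. My plan is to invoke the Tarski transfer principle (equivalently, the Artin--Lang homomorphism theorem), which states that any finite conjunction of polynomial equalities, strict inequalities, and weak inequalities with real coefficients that has a solution in some real closed extension of $\R$ already has a solution in $\R$ itself. Specializing the tuple of images of $x_1,\ldots,x_n$ from the real closure of $K$ back to $\R^n$ then produces an honest $\bm{x}^*\in\R^n$ satisfying the three sign conditions of item (1), contradicting emptiness. The elementary algebraic steps above (closure of $T$, Zorn extension, quotient construction) are essentially bookkeeping, but Artin--Lang is genuinely model-theoretic; for this reason I would ultimately cite the formulation in \cite{bochnak2013real} rather than reprove the transfer principle from scratch.
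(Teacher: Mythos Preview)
The paper does not supply its own proof of \Cref{theo:posi}; the theorem is quoted as background from real algebraic geometry with citations to \cite{bochnak2013real} and \cite{parrilo2003semidefinite}, and is then used as a black box in the proof of \Cref{lem:refute}. So there is no in-paper argument to compare against. Your sketch is precisely the classical proof of Stengle's Positivstellensatz as it appears in Bochnak--Coste--Roy: the easy direction by evaluation, the hard direction by extending to a maximal proper cone via Zorn, passing to the ordered residue field, and invoking Artin--Lang/Tarski transfer to pull a witness back into $\R^n$. Your identification of the transfer principle as the only genuinely deep ingredient is accurate, and citing \cite{bochnak2013real} for it is exactly what the paper itself does.

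One small technical wrinkle worth cleaning up: the set $T=\{\gamma+\psi^2+\phi\}$ you introduce is not closed under addition as written, since $\psi_1^2+\psi_2^2$ need not be $\psi^2$ for any single $\psi\in\Xi$. The standard fix is to work instead with the preorder $P_0=\Gamma+\Phi$ (which \emph{is} a cone, because $\Phi$ is an ideal and squares already lie in $\Gamma$) and to phrase ``no certificate exists'' as $P_0\cap(-\Xi^2)=\emptyset$; Zorn then extends $P_0$ to a maximal cone $P$ disjoint from $-\Xi^2$, and the remainder of your argument (prime support, ordered quotient field, real closure, transfer) goes through verbatim. This is bookkeeping, not a gap in the idea.
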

  In summary, \textit{Positivstellensatz} shows that the refute set being empty is equivalent to a feasibility problem in \eqref{eq:pos}. The feasibility problem is easier to solve than the original problem with \textit{global positiveness constraints}. Nonetheless, \textit{Positivstellensatz} does not necessarily require the problem to be \textit{globally positive}. We can follow the same procedure as mentioned above to construct a refute set for the \textit{locally positive} problem \eqref{eq: fundemantal problem} and then use \textit{Positivstellensatz} to turn the problem into a feasibility problem similar to \eqref{eq:pos}, which can then be formed into an ordinary nonlinear program.

\section{Methodology}
In this section, we introduce the proposed optimization algorithm to efficiently solve problem \eqref{probtrans} leveraging \cref{theo:posi}. The overall procedure of the algorithm contains four main steps: 
\begin{itemize}
    \item To leverage techniques from SOSP, we construct lower bound polynomial functions to replace the constraints of \eqref{probtrans}. 
    \item To apply \textit{Positivstellensatz} to efficiently solve the lower bound problem, we construct its corresponding refute problem and try to show it is empty.
    \item To efficiently find solutions for refute problem, we construct the corresponding sufficient problem with reduced decision variable search space, and further transform it to a nonlinear programming problem that is easy to solve.
    \item Solve sufficient nonlinear programming problem using off-the-shelf nonlinear programming solver.
\end{itemize}

In the following subsections, we will introduce the details of three critical problems in prescribed four steps.

\subsection{Lower Bound Problem}
Since \Cref{theo:posi} only applies to polynomials in $\bm{x}$, we first need to bound $f(\bm{x}^r + \bm{y}\lambda)$ below using a polynomial. Specifically, we replace the trigonometric terms in $f(\bm{x}^r + \bm{y}\lambda)$ using small angle approximation, where $\sin(x) \approx x$ and $\cos(x) \approx 1 - \frac{x^2}{2}$, which results in a multi-variate polynomial $g(\bm{x}^r + \bm{y}\lambda)$. It has been shown that the relative error of small angle approximation is less than $ 1\%$ when $x < 0.244 $ \si{\radian}. In \cref{sec:result}, we empirically demonstrate that $g(\bm{x}^r + \bm{y}\lambda)$ is the lower bound of $f(\bm{x}^r + \bm{y}\lambda)$, and the rigorous proof is left for future work. For the 2-dimensional example shown in \cref{fig:1}, we have $f(\bm{x}^r + \bm{y}\lambda) = x_{wall} - \cos(x_1^r + y_1\lambda) - \cos(x_2^r + y_2\lambda)$. Then we can construct the corresponding $g(\bm{x}^r + \bm{y}\lambda) = \cos(x_1^r)(1 - \frac{y_1^2\lambda^2}{2}) + \cos(x_2^r)(1 - \frac{y_2^2\lambda^2}{2}) - \sin(x_1^r)y_1\lambda - \sin(x_2^r)y_2\lambda$.

Now suppose the $g(\bm{x}^r + \bm{y}\lambda)$ is the lower bound of $f(\bm{x}^r + \bm{y}\lambda)$, such that $f(\bm{x}^r + \bm{y}\lambda) \geq g(\bm{x}^r + \bm{y}\lambda) \geq 0$. The lower bound problem of \cref{probtrans} is defined as:
 \begin{equation}
      \begin{split}
          \min~& - \lambda\\
          \text{s.t. }
                & \forall y_i^2 \leq 1, i = 1,2,\ldots,n, \\
                &g(\bm{x}^r + \bm{y}\lambda)\geq 0.
      \end{split}
      \label{problowerbound}
\end{equation}
    
\subsection{Refute Certification Problem}\label{subsec:refute_prob}
We can show that the \textit{local positiveness constraint} in \eqref{problowerbound} is satisfied by showing its refute set is empty. The refute set is constructed as:
\begin{equation}
    \begin{split}
    \begin{cases}
      \gamma_0^* = -g(\bm{x}^r + \bm{y}\lambda) \geq 0\\
       \gamma_i^* = 1 - y_i^2 \geq 0, i = 1,2,\ldots,n 
    \end{cases}
        \label{eq:refute}
    \end{split}
\end{equation}
Then we can use \cref{theo:posi} to turn the emptiness problem for \eqref{eq:refute} to a feasibility problem similar to \eqref{eq:pos}. Finally, the problem \cref{problowerbound} can be turned into the following optimization problem:
\begin{equation}
    \begin{split}
        &\min~ -\lambda,\\
        \text{ s.t. } & \exists p_i\in\text{SOS},i = 0,1,2,\ldots, \text{ such that } p_0 + p_1\gamma_0^* + \ldots +   \\
         & p_s\gamma_n^* + p_{01}\gamma_0^*\gamma_1^* + \ldots + p_{012\ldots n}\gamma_0^*\ldots\gamma_n^* + 1 = 0.
    \end{split}
    \label{probafterposi}
\end{equation}
\eqref{probafterposi} is equivalent to \eqref{problowerbound}, which will be proved in \cref{lem:refute}.

Here we highlight that problem \eqref{probafterposi} inherently differentiates from the state-of-the-art SOSP due to the fact that set of inequalities in \eqref{eq:refute} contain an additional decision variable $\lambda$ to be optimized, which is not the free variable $\bm{x}$ in \cref{theo:posi}.

\subsection{Sufficient Nonlinear Programming Problem}\label{SDP}
To efficiently search for the existence of SOS polynomials $\{p_i\}$ while maximizing the desired tolerance bound $\lambda$, we set $p_i$ to be a positive scalar $\alpha_i$ for all $i\geq1$. By defining a decision vector $c^* = \begin{bmatrix} \lambda & \alpha_1 & \alpha_2 & \ldots & \alpha_{012\ldots n} \end{bmatrix}$ and corresponding weight vector $w^* = \begin{bmatrix} -1 & 0 & 0 & \ldots & 0\end{bmatrix}$.  A simplified problem of \eqref{probafterposi} can be defined:
\begin{equation}
    \begin{split}
        &\min~ {w^*}^T c^*,\\
        \text{ s.t. } &p_0 = -\alpha_1\gamma_0^* \ldots - \alpha_s\gamma_n^* - \alpha_{01}\gamma_0^*\gamma_1^* -\ldots\\
        &- \alpha_{012\ldots n}\gamma_0^*\ldots\gamma_n^* - 1  \in\text{SOS}, \\
        & \alpha_i \geq 0, i = 1,2,\ldots
    \end{split}
    \label{probsufficient}
\end{equation}
It will be shown in \cref{lem:suffi} that the solution for \eqref{probsufficient} is sufficient to satisfy the constraints \eqref{probafterposi}, hence provides a lower bound to $\lambda$ in \eqref{probafterposi}.

To solve for \eqref{probsufficient}, suppose the degree of $p_0$ is $2d$, we first do a sum-of-squares decomposition of $p_0$ such that $p_0 = Y^\top Q^*(\lambda, \alpha_1, \alpha_2, \ldots , \alpha_{012\ldots n}) Y$, where $Q^*$ is symmetric and $Y = \begin{bmatrix}1 & y_1 & y_2 & \ldots & y_n & y_1y_2 & \ldots & y_n^d \end{bmatrix}$. Specifically, for off-diagonal terms $Q^*_{ij}$ as the element of $Q^*$ at $i$-th row and $j$-th column ($i \neq j$), assuming the the coefficient of the term $Y_iY_j$ in $p_0$ is $w_{ij}$, we set $Q^*_{ij} = \frac{w_{ij}}{2}$. Note that multiple $(i,j)$ combinations may result in same $Y_iY_j$. We recommend to select one $(i,j)$ combination to be $\frac{w_{ij}}{2}$ while setting the rest to be zero. 

With the decomposed $Q^*$, we can formulate the equivalent nonlinear programming problem of \eqref{probsufficient} according to \cref{lem:sdp}: 
\begin{equation}
    \begin{split}
        &\min~ {w^*}^T c^*,\\
        \text{ s.t. } 
        & det(Q^*(\lambda, \alpha_1, \alpha_2,\ldots , \alpha_{012\ldots n})_k) \geq 0, k = 1,2,\ldots\\
        & \alpha_i \geq 0, i = 1,2,\ldots
    \end{split}
    \label{probfinal}
\end{equation}
where $Q^*(\lambda, \alpha_1, \alpha_2, \ldots , \alpha_{012\ldots s})_k$ denotes the $k \times k$ submatrix consisting of the first $k$ rows and columns of $Q^*(\lambda, \alpha_1, \alpha_2, \ldots , \alpha_{012\ldots n})$. Note that \eqref{probfinal} is a standard nonlinear programming problem. It can be solved efficiently by off-the-shelf nonlinear programming solvers, such as \verb+fmincon+ in MATLAB.

\begin{algorithm}
\caption{Joint Tolerance Estimation}\label{alg:sos}
\begin{algorithmic}[1]
\Procedure{JTE}{$\bm{x}^r$, $f(\cdot)$} 
\State Construct lower bound problem \eqref{problowerbound}.
\State Construct refute certification problem \eqref{probafterposi}.
\State Construct sufficient nonlinear programming problem \eqref{probfinal}
\State Solve \eqref{probfinal} using nonlinear programming solver.
\EndProcedure
\end{algorithmic}
\end{algorithm}

The integrated algorithm for joint tolerance estimation is summarized in \cref{alg:sos}, which efficiently finds the lower bound solution of \eqref{probori}.

\section{Properties of JTE}\label{sec:proof}
In this section, we will prove that the solution from \Cref{alg:sos} is the lower bound of the optimal solution of \eqref{probori}. The main result is summarized in the following theorem. 
\begin{theorem}[Feasibility of \Cref{alg:sos}]
Under \Cref{alg:sos}, the solution of problem \eqref{probfinal}  provides the lower bound of problem \eqref{probori}, \ie denoting the solution of problem \eqref{probfinal} as $\lambda_*$, the optimal solution for \eqref{probori} as $\lambda$, then $0 \leq \lambda^* \leq \lambda$.
\label{theo1}
\end{theorem}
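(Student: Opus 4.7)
The plan is to establish Theorem 1 by chaining together the successive problem reductions introduced in Section 4 and showing that optimality (or feasibility) of the later problem translates into a lower bound on the optimum of the earlier one. Concretely, I will argue along the sequence \eqref{probori} $\Leftrightarrow$ \eqref{probtrans} $\Leftarrow$ \eqref{problowerbound} $\Leftrightarrow$ \eqref{probafterposi} $\Leftarrow$ \eqref{probsufficient} $\Leftrightarrow$ \eqref{probfinal}, where each ``$\Leftarrow$'' is understood as: a feasible $\lambda$ on the right is a feasible $\lambda$ for the problem on the left. Once such a chain is in place, any feasible $\lambda^*$ returned from \eqref{probfinal} is automatically feasible for \eqref{probori}, and since \eqref{probori} is a maximization in $\lambda$, the optimal value $\lambda$ of \eqref{probori} satisfies $\lambda \geq \lambda^*$. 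Nonnegativity $\lambda^* \geq 0$ follows because $\lambda = 0$ is trivially feasible (at $\bm{y}\lambda = 0$ the constraint reduces to $f(\bm{x}^r)\geq 0$), and the NLP solver's feasible set includes this point by construction.

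Now I would fill in each implication. First, \eqref{probori} $\Leftrightarrow$ \eqref{probtrans} is immediate from the change of variables $\bm{y} = (\bm{x}-\bm{x}^r)/\lambda$ coupled with the fact that $\|\bm{x}-\bm{x}^r\|_\infty \leq \lambda$ is exactly $y_i^2\leq 1$ for all $i$. Second, \eqref{problowerbound} $\Rightarrow$ \eqref{probtrans} uses the pointwise inequality $f(\bm{x}^r+\bm{y}\lambda)\geq g(\bm{x}^r+\bm{y}\lambda)$ assumed in Section IV-A: any $\lambda$ satisfying $g\geq 0$ on the box also satisfies $f\geq 0$ on the box. Third, \eqref{problowerbound} $\Leftrightarrow$ \eqref{probafterposi} is a direct application of \cref{theo:posi} (this is the content of the referenced \textbf{Lemma \ref{lem:refute}}): the refute set in \eqref{eq:refute} is empty iff there exist SOS multipliers $p_0,p_1,\ldots,p_{01\ldots n}$ such that the Positivstellensatz identity holds with the trailing $+1$; I would be careful here to note that $\lambda$ plays the role of a parameter rather than a free variable, so the Positivstellensatz is invoked fiberwise for each candidate $\lambda$, and the feasibility of \eqref{probafterposi} is exactly the existence of such multipliers. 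Fourth, \eqref{probsufficient} $\Rightarrow$ \eqref{probafterposi} (this is \textbf{Lemma \ref{lem:suffi}}) follows because restricting each $p_i$ $(i\geq 1)$ to a nonnegative constant $\alpha_i$ is a special case of being SOS, so the sufficient problem's feasible set is a subset of the refute-certification problem's feasible set; hence any feasible $\lambda$ of \eqref{probsufficient} is feasible for \eqref{probafterposi}. Fifth, \eqref{probfinal} $\Leftrightarrow$ \eqref{probsufficient} (this is \textbf{Lemma \ref{lem:sdp}}) uses the standard Gram-matrix characterization: $p_0 = Y^\top Q^* Y \in \text{SOS}$ iff $Q^*$ is positive semidefinite, which by Sylvester's criterion is equivalent to the nonnegativity of all leading principal minors $\det(Q^*_k)\geq 0$.

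The hard part, and the reason for stating Theorem 1 separately from the three lemmas, is controlling the direction of the inequalities throughout the chain so that the final claim is truly a lower bound rather than an upper bound. In particular, step three (the Positivstellensatz reduction) must give an \emph{equivalence} between emptiness of the refute set and existence of the SOS certificate; if it were only one direction, feasibility of the NLP would not guarantee that the original local positiveness holds, and the $\lambda^*\leq \lambda$ conclusion would break. I would therefore verify carefully that \cref{theo:posi} is applied in the ``certificate $\Rightarrow$ emptiness'' direction at this step, since that is the direction actually needed: a feasible tuple $(\lambda, \{\alpha_i\})$ in \eqref{probfinal} produces an SOS certificate, which by \cref{theo:posi} forces the refute set to be empty at this $\lambda$, which in turn forces $g(\bm{x}^r+\bm{y}\lambda)\geq 0$ on the unit box, which by the lower-bound property of $g$ implies $f(\bm{x}^r+\bm{y}\lambda)\geq 0$ on the unit box, which is precisely feasibility of $\lambda$ in \eqref{probori}. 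A secondary subtlety is that the minors characterization of PSD in step five requires $Q^*$ to be symmetric; symmetry is built into the construction described around \eqref{probfinal}, so this is routine but worth flagging.
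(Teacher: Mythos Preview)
Your proposal is essentially the paper's own proof: you chain exactly the same five reductions (\cref{lem:trans}, \cref{lem:lower}, \cref{lem:refute}, \cref{lem:suffi}, \cref{lem:sdp}) in the same order, with the added care of making explicit which direction of \cref{theo:posi} is actually used and flagging the symmetry hypothesis on $Q^*$. The only caveat worth noting---present in the paper as well---is that Sylvester's criterion via \emph{leading} principal minors characterizes positive \emph{definiteness}, not semidefiniteness, so strictly speaking the equivalence in step five requires either all principal minors or a separate argument.
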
 

\subsection{Preliminary Results.}
Before proving \cref{theo1}, we present preliminary results that are useful toward proving the theorem. 
By change of variables, \eqref{probtrans} is equivalent to \eqref{probori} as summarized in \cref{lem:trans}. By using a conservative constraint $g$ instead of $f$, \eqref{problowerbound} provides a lower bound of \eqref{probtrans} as summarized in \cref{lem:lower}.
 \cref{lem:refute} shows problem with \textit{local positiveness constraints} can be transformed into a problem that searches for SOS polynomials, \ie \eqref{probafterposi}, by constructing the corresponding refute certifications. 
\cref{lem:suffi} shows that we can solve a sufficient problem of \eqref{probafterposi}, which only searches for one type of SOS polynomials as shown in \eqref{probsufficient}. Finally, \cref{lem:sdp} shows \eqref{probsufficient} is equivalent to the nonlinear program \eqref{probfinal}.

\begin{lemma}[Transformation]
Supposing the solution of \eqref{probori} is $\lambda_1$ and the solution of \eqref{probtrans} is $\lambda_2$. Then we have $\lambda_1 = \lambda_2$.
\label{lem:trans}
\end{lemma}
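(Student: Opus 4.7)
The plan is to show that the feasible sets of $\lambda$ in \eqref{probori} and \eqref{probtrans} coincide; since both problems share the identical objective of maximizing $\lambda$, equality of the feasible sets immediately yields $\lambda_1 = \lambda_2$.

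The key step is a simple change of variables. For any $\lambda > 0$, define the bijection $\bm{y} = (\bm{x} - \bm{x}^r)/\lambda$ with inverse $\bm{x} = \bm{x}^r + \bm{y}\lambda$. Under this map, the condition $\|\bm{x} - \bm{x}^r\|_\infty \leq \lambda$ translates entrywise to $|y_i| \leq \lambda$ rescaled to $|y_i|/1 \leq 1$, i.e.\ $y_i^2 \leq 1$ for all $i$. Hence the universal quantifier
\begin{equation*}
\forall\, \|\bm{x}-\bm{x}^r\|_\infty \leq \lambda,\; f(\bm{x}) \geq 0
\end{equation*}
from \eqref{probori} is logically equivalent to
\begin{equation*}
\forall\, y_i^2 \leq 1,\, i=1,\ldots,n,\; f(\bm{x}^r + \bm{y}\lambda) \geq 0
\end{equation*}
from \eqref{probtrans}. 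Thus any $\lambda > 0$ is feasible for one problem if and only if it is feasible for the other.

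The degenerate case $\lambda = 0$ must be handled separately, since the change of variables is not defined there. In both problems, setting $\lambda = 0$ reduces the universal constraint to the single requirement $f(\bm{x}^r) \geq 0$, which holds by the standing assumption that $\bm{x}^r$ is a feasible reference. Hence $\lambda = 0$ is feasible for both formulations, and the feasible sets agree on the entire nonnegative axis.

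I do not anticipate a real obstacle here: the proof is essentially a one-line change of variables, and the only subtle point is isolating the $\lambda=0$ case to avoid dividing by zero in the definition of $\bm{y}$. With the feasible sets shown to coincide and the objectives identical, we conclude $\lambda_1 = \lambda_2$.
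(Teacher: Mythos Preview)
Your argument is correct and follows essentially the same route as the paper's proof: both show, via the substitution $\bm{x}=\bm{x}^r+\bm{y}\lambda$, that the constraint region $\{\bm{x}:\|\bm{x}-\bm{x}^r\|_\infty\le\lambda\}$ coincides with $\{\bm{x}^r+\bm{y}\lambda:\|\bm{y}\|_\infty\le 1\}$, whence the two feasible $\lambda$-sets and hence the optima agree. Your explicit treatment of the degenerate case $\lambda=0$ is a nice addition the paper omits.
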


\begin{lemma}[Lower Bound]
Supposing the solution of \eqref{probtrans} is $\lambda_2$ and the solution of \eqref{problowerbound} is $\lambda_3$. If $g(\bm{x})\leq f(\bm{x})$ for all $\bm{x}$, then we have $\lambda_2 \geq \lambda_3$.
\label{lem:lower}
\end{lemma}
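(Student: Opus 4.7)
The plan is to exploit the monotonicity of the problems in the constraint function: since $g$ is a uniform lower bound of $f$, the feasible set of admissible tolerances for \eqref{problowerbound} is contained in the feasible set of admissible tolerances for \eqref{probtrans}, so the supremum over the smaller set is at most the supremum over the larger one.

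Concretely, I would proceed as follows. First, I let $\lambda_3$ denote the optimal value of \eqref{problowerbound} and pick any $\lambda$ that is feasible for \eqref{problowerbound}; by definition this means that for every $\bm{y}\in\mathbb{R}^n$ with $y_i^2\leq 1$ for all $i$, we have $g(\bm{x}^r+\bm{y}\lambda)\geq 0$. Second, I invoke the assumption $g(\bm{x})\leq f(\bm{x})$ for all $\bm{x}$, applied at the point $\bm{x}=\bm{x}^r+\bm{y}\lambda$, to conclude
\begin{equation*}
f(\bm{x}^r+\bm{y}\lambda)\;\geq\;g(\bm{x}^r+\bm{y}\lambda)\;\geq\;0
\end{equation*}
for every such $\bm{y}$. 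Hence $\lambda$ is also feasible for \eqref{probtrans}. Taking the supremum over feasible $\lambda$ on both sides then gives $\lambda_2\geq\lambda_3$.

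Since this is essentially a set-inclusion argument on the feasible sets of two nested optimization problems, there is no real obstacle; the only subtlety to mention briefly is the edge case in which \eqref{problowerbound} is infeasible or unbounded. Feasibility is inherited trivially (any $\lambda=0$ works whenever $g(\bm{x}^r)\geq 0$, which follows from $f(\bm{x}^r)\geq 0$ if one additionally assumes $g(\bm{x}^r)\geq 0$, e.g., by the small-angle construction at $\bm{y}=0$), and the supremum comparison holds in $\mathbb{R}\cup\{+\infty\}$ so no boundedness assumption is needed for the inequality itself. I would state the proof in three or four lines and not belabor these remarks.
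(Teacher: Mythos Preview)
Your proposal is correct and follows essentially the same approach as the paper's (commented-out) proof: both argue by inclusion of feasible sets, observing that $g\leq f$ makes every $\lambda$ feasible for \eqref{problowerbound} also feasible for \eqref{probtrans}, whence the maximum over the smaller set is no larger. Your version is in fact cleaner, since you work directly with the feasible set of tolerances $\lambda$, whereas the paper's draft proof phrases things in terms of sets of $\bm{y}$ values and introduces an extraneous SOS set $S_3$ that is not needed for this lemma.
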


\begin{lemma}[Refute Equivalency]
If $\lambda_4$ is the solution of \eqref{probafterposi}, and $\lambda_3$ is the solution of \eqref{problowerbound}, then $\lambda_4 = \lambda_3$ .
\label{lem:refute}
\end{lemma}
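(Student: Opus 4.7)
The plan is to establish $\lambda_4=\lambda_3$ by showing that, for each fixed candidate value of $\lambda \geq 0$, feasibility in \eqref{problowerbound} and feasibility in \eqref{probafterposi} are equivalent. Since both problems share the objective $-\lambda$, equivalence of their feasible sets (in $\lambda$) immediately yields equality of optima. The bridge between the two formulations is \cref{theo:posi} applied pointwise in $\lambda$: define $\gamma_0^\ast(\bm{y};\lambda) = -g(\bm{x}^r+\bm{y}\lambda)$ and $\gamma_i^\ast(\bm{y}) = 1-y_i^2$ as in \eqref{eq:refute}; these are precisely the polynomials whose nonnegativity defines the refute set. With no $\psi_k$ constraints and no equality constraints, \cref{theo:posi} specializes to $\psi = 1$, $\phi = 0$, so condition (2) of the theorem reduces exactly to the SOS identity appearing in \eqref{probafterposi}.

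I would carry the two directions out in turn. For the direction $\lambda_3 \leq \lambda_4$, take any $\lambda$ feasible for \eqref{problowerbound}. Then $g(\bm{x}^r+\bm{y}\lambda)\geq 0$ on the box, so the refute set defined by $\gamma_0^\ast \geq 0$, $\gamma_i^\ast \geq 0$ contains only points where $g=0$; applying \cref{theo:posi} (after the standard trick of treating the boundary case via a perturbation, or equivalently adjoining $\psi=\gamma_0^\ast$ to the monoid) produces SOS certificates $\{p_I\}$ realizing the identity in \eqref{probafterposi}, so $\lambda$ is feasible there as well. For the reverse direction $\lambda_4 \leq \lambda_3$, suppose the SOS identity of \eqref{probafterposi} holds for some $\lambda$ but, for contradiction, that there exists $\bar{\bm{y}}$ with $\bar y_i^2 \leq 1$ and $g(\bm{x}^r+\bar{\bm{y}}\lambda)<0$. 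At $\bar{\bm{y}}$, every $\gamma_j^\ast \geq 0$ and every $p_I \geq 0$ (since each $p_I$ is SOS), so the left-hand side of the identity evaluates to a nonnegative number, while the right-hand side is $-1$; this contradiction shows no such $\bar{\bm{y}}$ exists, hence $\lambda$ is feasible for \eqref{problowerbound}. The second direction is genuinely easy; the first direction leans on \cref{theo:posi}.

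The main obstacle I anticipate is the boundary subtlety in the first direction: \cref{theo:posi} certifies emptiness of a set with mixed $\geq,\neq,=$ constraints, while \eqref{eq:refute} uses only non-strict $\geq$. Strictly speaking, the refute set $\{\gamma_0^\ast\geq 0,\gamma_i^\ast\geq 0\}$ can still be nonempty when $g$ attains zero on the box, even though such $\lambda$ is feasible for \eqref{problowerbound}. I would reconcile this either by continuity of $g(\bm{x}^r+\bm{y}\lambda)$ in $\lambda$, noting that the suprema over the open condition $g>0$ and the closed condition $g\geq 0$ coincide, or by augmenting the monoid with $\psi=\gamma_0^\ast$ so that \cref{theo:posi} characterizes the desired set exactly. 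Once this is handled, the remainder of the proof is a direct invocation of the Positivstellensatz equivalence and routine bookkeeping of the SOS cone structure from \cref{prop}.
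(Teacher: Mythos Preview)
Your approach is essentially the same as the paper's: specialize \cref{theo:posi} to the case with only $\geq$ constraints (so $\psi=1$, $\phi=0$), obtain the reduced condition $\gamma+1=0$ with $\gamma$ in the cone generated by $\{\gamma_j^\ast\}$, and conclude that feasibility in \eqref{problowerbound} and the SOS identity in \eqref{probafterposi} coincide for each fixed $\lambda$. You are in fact more careful than the paper, which does not address the strict-versus-nonstrict boundary issue you correctly flag; your continuity argument (that the supremum of $\lambda$ over the strict condition $g>0$ on the box equals the maximum over $g\geq 0$) is the clean fix, whereas your alternative of adjoining $\psi=\gamma_0^\ast$ to the monoid would change the certificate form and no longer match \eqref{probafterposi} as written.
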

\begin{proof}
Based on \cref{theo:posi} and the fact that we do not have inequation constraints ($\neq$) or equality constraints ($=$), the \textit{Positivstellensatz} conditions can be reduced to the following equivalence statement:
  \begin{gather}
      \textit{The set}
          \left\{
            x \in \mathbb{R}^n \bigg|\begin{array}{ll}
                                    \gamma_j(\bm{x})\geq 0,j = 1,..,s\\
                                    \end{array}
          \right\}
          \textit{ is empty.}\nonumber\\
        \Longleftrightarrow  \textit{There exists } \gamma \in \Gamma \textit{ such that } \gamma+1 = 0.
          \label{eq:pos1}
  \end{gather}
  For the constraint $\forall \| \bm{x} - \bm{x}^r\|_{\infty} \leq \lambda, f(\bm{x}) \leq 0$, its refute certification is $\forall \| \bm{x} - \bm{x}^r\|_{\infty}\leq \lambda, f(\bm{x}) > 0$. By change of variables, we refute certifications of the constraints in \eqref{problowerbound} can be written as  \eqref{eq:refute}. By applying \eqref{eq:pos1} and \cref{prop} for $\Gamma$, we know  \eqref{eq:refute} is equivalent to the following statement:
\begin{equation}\label{eq:equi_refute}
    \begin{split}
        &\exists p_i\in\text{SOS},i = 0,1,2,\ldots, \text{ such that } \gamma +1 = 0 \text{ and}\\
       & \gamma = p_0 + p_1\gamma_0^* + \ldots  + p_{01}\gamma_0^*\gamma_1^* + \ldots + p_{012\ldots n}\gamma_0^*\ldots\gamma_n^*.
    \end{split}
\end{equation}
  Therefore, problem \eqref{problowerbound} is equivalent to problem \eqref{probafterposi}.

\end{proof}

\begin{lemma}[Sufficiency]
Supposing $\lambda_5$ is the solution of \eqref{probsufficient}, and $\lambda_4$ is the solution of \eqref{probafterposi}, then $\lambda_5 \leq \lambda_4$.
\label{lem:suffi}
\end{lemma}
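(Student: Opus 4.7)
The plan is to show that \eqref{probsufficient} is a restriction of \eqref{probafterposi} in the sense that every feasible tuple of \eqref{probsufficient} induces a feasible tuple of \eqref{probafterposi} with the same value of $\lambda$. Since both problems share the same objective ($-\lambda$), a smaller feasible set for the restriction implies its optimum is no larger, giving $\lambda_5 \leq \lambda_4$.

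Concretely, I would first recall that any nonnegative real number $\alpha \geq 0$ is itself a (degree-zero) sum-of-squares polynomial, since $\alpha = (\sqrt{\alpha})^2 \in \text{SOS}$. Thus, given any candidate $(\lambda, \alpha_1, \alpha_2, \ldots, \alpha_{012\ldots n})$ that is feasible for \eqref{probsufficient}, interpreting each scalar $\alpha_i$ as a constant polynomial $p_i(\bm{y}) \equiv \alpha_i$ yields $p_i \in \text{SOS}$ for all $i \geq 1$. By the equality constraint of \eqref{probsufficient}, the induced $p_0 = -\alpha_1 \gamma_0^* - \cdots - \alpha_{012\ldots n}\gamma_0^*\cdots\gamma_n^* - 1$ is explicitly required to be SOS. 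Hence the corresponding $\{p_i\}$ satisfies
\[
p_0 + p_1 \gamma_0^* + \cdots + p_{012\ldots n}\gamma_0^*\cdots\gamma_n^* + 1 = 0,
\]
so $(\lambda, \{p_i\})$ is feasible for \eqref{probafterposi}.

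From this containment of feasible sets I would conclude that any $\lambda$ attainable in \eqref{probsufficient} is also attainable in \eqref{probafterposi}. Since both problems minimize $-\lambda$ (equivalently, maximize $\lambda$), the supremum of attainable $\lambda$ values can only grow when the search space is enlarged. Therefore $\lambda_5 \leq \lambda_4$, which is exactly the claim.

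I do not anticipate a serious technical obstacle in this lemma; the only subtle point is to justify that restricting the $p_i$ (for $i \geq 1$) from arbitrary SOS polynomials in $\bm{y}$ down to nonnegative scalars is indeed a valid specialization, which amounts to the one-line observation that nonnegative constants are SOS. The proof is therefore short and essentially a feasible-set-inclusion argument, with the SOS-constant observation being the only ingredient beyond definitions.
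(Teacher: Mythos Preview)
Your argument is correct and matches the paper's approach: both observe that nonnegative scalars are degree-zero SOS polynomials, so the feasible set of \eqref{probsufficient} embeds into that of \eqref{probafterposi}, whence the optimal $\lambda$ can only shrink under the restriction. If anything, your feasible-set-inclusion phrasing is cleaner than the paper's, which frames the same idea in terms of a search-space containment $\Omega^*\subseteq\Omega$.
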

\begin{proof}
It has been shown that it is impossible to check every instance of the possible sum-of-squares polynomial~\cite{parrilo2003semidefinite}. Nonetheless, the simplest SOS is a positive constant scalar, \ie $\alpha \geq 0 \in \text{SOS}$. By substituting $p_i$ with $\alpha_i\geq0$ for $i = 1,2,\ldots$ and defining a new decision vector $c^* = \begin{bmatrix} \lambda & \alpha_1 & \alpha_2 & \ldots & \alpha_{012\ldots n} \end{bmatrix}$ and weight vector $w^* = \begin{bmatrix} -1 & 0 & 0 & \ldots & 0\end{bmatrix}$,  \eqref{probafterposi} can be rewrote as \eqref{probsufficient}. Denote the searching space for SOS polynomials $p_j, j = 1,2,\dots$ in \eqref{probafterposi} as $\Omega = [\omega_1, \omega_2, \ldots]$, where $\omega_i \in \mathbb{R}$ is the coefficient of $i$-th term of $p_j$. The search space for SOS polynomials in \eqref{probsufficient} is $\Omega^* = [\omega_1^*]$, where $\omega_1^* \in \mathbb{R}^{+}$ is the scalar term. Therefore, we have $\Omega^* \subseteq \Omega$, and $\lambda_5$ that solves for $\Omega^*$ also solves for $\Omega$, not vice versa. Since $0$ is the common solution for both $\Omega^*$ and $\Omega$, and their common solution sets evolve continuously, Thus we have $\lambda_5 \leq \lambda_4$.
\end{proof}

\begin{lemma}[Semi-Definite Cone Equivalency]
Supposing $\lambda_6$ is the solution of \eqref{probfinal}, and $\lambda_5$ is the solution of \eqref{probsufficient}
, then $\lambda_5 = \lambda_6$.
\label{lem:sdp}
\end{lemma}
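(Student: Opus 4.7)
The plan is to establish that the SOS constraint in \eqref{probsufficient} and the determinant constraints in \eqref{probfinal} carve out the same feasible region in the decision variables $(\lambda, \alpha_1, \ldots, \alpha_{012\ldots n})$, so the two optimization problems share the same objective value, yielding $\lambda_5 = \lambda_6$.

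First, I would invoke the standard Gram-matrix characterization of sum-of-squares polynomials: a polynomial $p_0$ of degree $2d$ in the variables $y_1, \ldots, y_n$ is SOS if and only if there exists a symmetric positive semi-definite matrix $Q$ with $p_0 = Y^\top Q Y$, where $Y$ is the vector of monomials in $y_1, \ldots, y_n$ of degree at most $d$. Second, I would verify that the matrix $Q^*(\lambda, \alpha_1, \ldots, \alpha_{012\ldots n})$ constructed in \cref{SDP} is a valid Gram matrix of $p_0$: its diagonal entries encode the coefficients of the monomials $Y_i^2$, and the off-diagonal entries are chosen so that $Q^*_{ij} + Q^*_{ji}$ equals the coefficient $w_{ij}$ of $Y_i Y_j$ in $p_0$. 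Under this identification, $p_0 \in \text{SOS}$ reduces exactly to $Q^*(\lambda, \alpha_1, \ldots, \alpha_{012\ldots n}) \succeq 0$. Third, I would translate positive semi-definiteness of $Q^*$ into the determinant conditions $\det(Q^*_k) \geq 0$ for every $k$, completing the equivalence of feasible sets. Since \eqref{probsufficient} and \eqref{probfinal} share the identical objective ${w^*}^\top c^*$ and the same $\alpha_i \geq 0$ constraints, this immediately yields $\lambda_5 = \lambda_6$.

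The main obstacle is the last step. Sylvester's criterion in its classical form states that a symmetric matrix is positive \emph{definite} iff all of its leading principal minors are strictly positive; for positive \emph{semi-}definiteness the correct criterion requires all principal minors, not just the leading ones, to be non-negative, since matrices such as $\text{diag}(0,-1)$ have all leading principal minors non-negative while failing to be PSD. Bridging this gap would require either (i) reading \eqref{probfinal} as implicitly quantifying over all principal minors rather than just leading ones, so that the condition exactly captures $Q^* \succeq 0$, or (ii) exploiting the particular structure of the parameterized $Q^*$ arising in our SOS decomposition (for instance, showing that any feasible $(\lambda, \alpha_i)$ can be perturbed to make $Q^* \succ 0$, at which point leading principals suffice by continuity). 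Once this PSD $\Longleftrightarrow$ determinant equivalence is settled, the remainder of the proof is essentially bookkeeping, and the lemma follows.
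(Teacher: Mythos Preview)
Your approach mirrors the paper's exactly: Gram-matrix characterization of SOS, then a Sylvester-type criterion to convert $Q^*\succeq 0$ into determinant inequalities. The paper's own proof simply asserts
\[
Q \succeq 0 \;\Longleftrightarrow\; \det(Q_k)\ge 0 \text{ for all leading } k,
\]
citing ``linear algebra'', and declares the two problems equivalent.

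You have correctly identified the genuine gap that the paper glosses over: the leading-principal-minor criterion characterizes positive \emph{definiteness}, not positive \emph{semi}-definiteness, and your $\mathrm{diag}(0,-1)$ counterexample is exactly the right one. So your proposal is not merely the same argument as the paper---it is the same argument with the weak point honestly flagged. Your suggested repair (i), reading \eqref{probfinal} as ranging over all principal minors, would genuinely fix the equivalence; repair (ii) would need an argument specific to the structure of $Q^*$ that neither you nor the paper supplies.

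There is a second gap worth naming, present in both your sketch and the paper's proof. The Gram representation $p_0 = Y^\top Q Y$ is not unique when distinct pairs $(i,j)$ give the same monomial $Y_iY_j$; the construction in \cref{SDP} fixes one particular $Q^*$ among many. Hence $Q^*\succeq 0$ is only \emph{sufficient} for $p_0\in\text{SOS}$: there may be choices of $(\lambda,\alpha_i)$ for which $p_0$ is SOS via some other Gram matrix while the specific $Q^*$ is indefinite. Strictly speaking this yields $\lambda_6 \le \lambda_5$ rather than equality, unless one either parameterizes the full affine family of Gram matrices or argues that for this particular $p_0$ the chosen $Q^*$ loses nothing.
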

\begin{proof}
Whether a polynomial is SOS can be efficiently verified through sum-of-squares decomposition~\cite{parrilo2003semidefinite}. 
Mathematically, given polynomial $h(\bm{x})$ whose degree is $2d$, the sum-of-squares decomposition decomposes $h(\bm{x})$ as:
\begin{equation}
    h(x) = z^T Q z, \quad z = \begin{bmatrix}1 & x_1  & \ldots & x_n & x_1x_2 & \ldots & x_n^d \end{bmatrix}
    \label{eq:gram}
\end{equation}
where $Q$ is a constant matrix. If $Q$ is positive semidefinite, then $h(\bm{x})\in\text{SOS}$. The condition that $Q\succeq0$ can be encoded in a semidefinite program~\cite{parrilo2003semidefinite}. In \eqref{probsufficient}, we directly enforce that the determinant of all submatrices of are positive. By linear algebra, we know that
\begin{equation}
    Q \succeq 0  \Longleftrightarrow det(Q_k) \geq 0 \text{ for all } k = 1,2,\ldots
    \label{eq:sdp}
\end{equation}
where $Q_k$ is the $k \times k$ submatrix consisting of the first $k$ rows and columns of $Q$ and $det(Q_k)$ denotes the determinant of submatrix $Q_k$. Therefore, the optimization problem \eqref{probsufficient} is equivalent to \eqref{probfinal}.
\end{proof}

\subsection{Proof of the Main Result}
\begin{proof}
  If $\lambda^*$ is the solution of \eqref{probfinal} and $\lambda_5$ is the solution of \eqref{probsufficient}. By \cref{lem:sdp}, we have $\lambda^* = \lambda_5$. If $\lambda_4$ is the solution of \eqref{probafterposi}, by \cref{lem:suffi}, we have $\lambda_5 \leq \lambda_4 $. If $\lambda_3$ is the solution of \eqref{problowerbound}, then by \cref{lem:refute}, we have $\lambda_3 = \lambda_4$. If $\lambda_2$ is the solution of \eqref{probtrans}, by \cref{lem:lower}, we have $\lambda_2 \geq \lambda_3$. Then, if $\lambda$ is the solution of \eqref{probori}. By \cref{lem:trans} we have $\lambda = \lambda_2$.
  From the above deduction, we have the main result $0\leq \lambda^* = \lambda_5 \leq \lambda_4 = \lambda_3 \leq \lambda_2 = \lambda$.
\end{proof}

\subsection{Additional Results}
\begin{theorem}[Scalability]\label{theo_multi}
If there exists multiple constraints $f_i(\bm{x}) \geq 0, i=1,2,\ldots,n$ and the solution for these constraints are $\lambda_i$, i.e., $\forall \|\bm{x}-\bm{x}^r\|_p\leq \lambda_i, f_i(\bm{x}) \geq 0, i=1,2,\ldots,n$.
Then $\lambda_{min} = \min\limits_i \lambda_i$ is a lower bound of the optimization $\min \lambda$ s.t. $\forall \|\bm{x}-\bm{x}^r\|_p\leq \lambda, \forall i, f_i(\bm{x})\geq 0$. In other words, $\forall \|\bm{x}-\bm{x}^r\|_p\leq \lambda_{min}, f_i(\bm{x}) \geq 0, i=1,2,\ldots,n$.
\end{theorem}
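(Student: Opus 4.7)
The plan is a direct feasibility argument: show that every point within the $\lambda_{min}$-ball around $\bm{x}^r$ lies inside every individual $\lambda_i$-ball, so all constraints $f_i$ are simultaneously satisfied, which makes $\lambda_{min}$ a feasible value of the joint optimization and therefore a valid lower bound on its optimum.

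First, I would fix an arbitrary $\bm{x}$ with $\|\bm{x}-\bm{x}^r\|_p \leq \lambda_{min}$. By the definition $\lambda_{min} = \min_i \lambda_i$, for every index $i$ we have $\lambda_{min} \leq \lambda_i$, and hence $\|\bm{x}-\bm{x}^r\|_p \leq \lambda_i$. The hypothesis $\forall \|\bm{x}-\bm{x}^r\|_p \leq \lambda_i,\, f_i(\bm{x}) \geq 0$ then immediately gives $f_i(\bm{x}) \geq 0$. Since $i$ was arbitrary, all constraints hold at $\bm{x}$, which is exactly the claim $\forall \|\bm{x}-\bm{x}^r\|_p \leq \lambda_{min},\, f_i(\bm{x}) \geq 0$ for $i = 1,\ldots,n$.

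Second, to conclude that $\lambda_{min}$ lower-bounds the joint optimum, I would observe that the argument above shows $\lambda_{min}$ is a feasible value for the optimization $\max \lambda$ subject to $\forall \|\bm{x}-\bm{x}^r\|_p \leq \lambda, \forall i, f_i(\bm{x}) \geq 0$ (note the theorem statement writes $\min$ but the tolerance-estimation problem is a max; the lower-bound claim is unaffected by this orientation). Since the optimum is at least as large as any feasible value, $\lambda_{min}$ is indeed a lower bound.

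Set-theoretically the argument is nothing more than the monotonicity of $p$-norm balls under radius: $B_p(\bm{x}^r,\lambda_{min}) \subseteq B_p(\bm{x}^r,\lambda_i)$ for every $i$, so $B_p(\bm{x}^r,\lambda_{min}) \subseteq \bigcap_i B_p(\bm{x}^r,\lambda_i) \subseteq \bigcap_i \{\bm{x} : f_i(\bm{x}) \geq 0\}$. There is no real obstacle here; the only thing to be careful about is not claiming tightness. In general $\lambda_{min}$ can be strictly smaller than the true joint tolerance, because the joint tolerance is determined by the worst-case $\bm{x}$ only within the joint ball, whereas each individual $\lambda_i$ must guard against worst cases in its own (possibly smaller-relevant) direction. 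Thus the proof gives a lower bound and not an equality, consistent with the theorem statement.
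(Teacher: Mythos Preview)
Your proof is correct and follows essentially the same approach as the paper: both arguments use the monotonicity of $p$-norm balls, noting that $\lambda_{min}\le\lambda_i$ implies $\{\bm{x}:\|\bm{x}-\bm{x}^r\|_p\le\lambda_{min}\}\subseteq\{\bm{x}:\|\bm{x}-\bm{x}^r\|_p\le\lambda_i\}$ for every $i$, so all $f_i$ are nonnegative on the smallest ball. Your additional remarks on the $\min$/$\max$ typo and on non-tightness are sound and go slightly beyond the paper's terse proof.
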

\begin{proof}
A straightforward proof is that, suppose $\lambda_i\leq\lambda_j, i \neq j$, we have $\{\bm{x}\mid \|\bm{x}-\bm{x}^r\|_p\leq \lambda_i\}\subseteq\{\bm{x}\mid \|\bm{x}-\bm{x}^r\|_p\leq \lambda_j\}$, then $\forall \|\bm{x}-\bm{x}^r\|_p\leq \lambda_i$, $f_i(\bm{x}) \geq 0$ and $f_j(\bm{x}) \geq 0$.
For the minimum $\lambda_{min}$, we have the intersection $\{\bm{x}\mid \|\bm{x}-\bm{x}^r\|_p\leq \lambda_{min}\} = \bigcap\limits_i \{\bm{x}\mid \|\bm{x}-\bm{x}^r\|_p\leq \lambda_{i}\}$. Hence, $\forall \|\bm{x}-\bm{x}^r\|_p\leq \lambda_{min}$, $f_i(\bm{x}) \geq 0$ for all $i=1,2,\ldots,n$.
\end{proof}
\Cref{theo_multi} implies that the computation complexity  grows linearly with the number of constraints if we solve each constraint independently and then minimize over all $\lambda_i$.

\begin{figure}
    \centering
    \includegraphics[width=1\columnwidth]{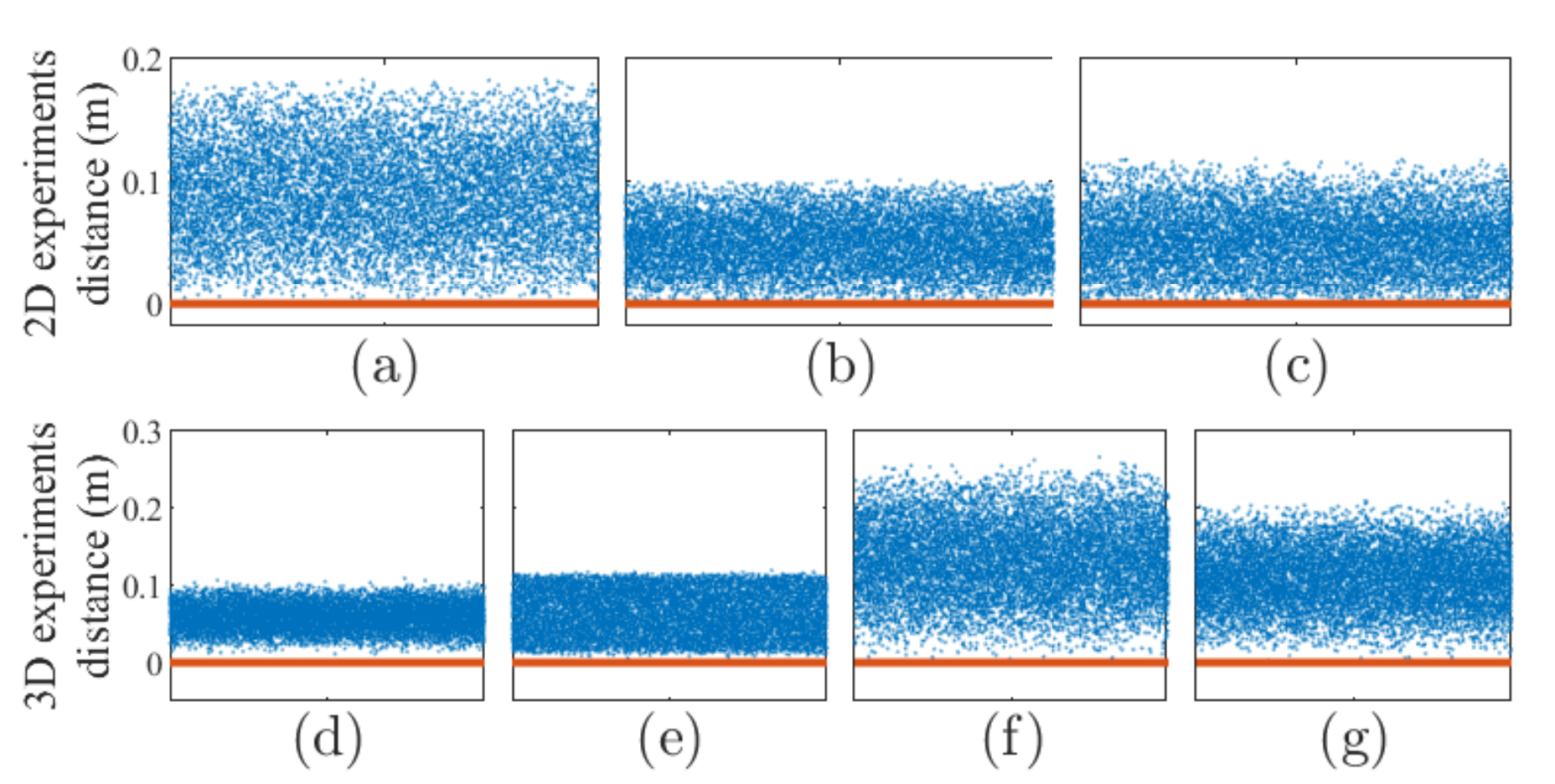}
    \caption{Validation of the algorithm for joint tolerance estimation on 2D problem and 3D problem. The blue points are sampled $\bm{x}$ in the hypercube $\|\bm{x}-\bm{x}^r\|\leq \lambda$ where $\lambda$ is the result returned by \eqref{alg:sos}. The horizontal axis is the sample ID. The vertical axis is the distance from the robot to the obstacle at that sample. These plots correspond to different constraints: (a) 2D - x axis half plane, $\xi_x = 1.456$. (b) 2D - y axis half plane, $\xi_y = 1.416$. (c) 2D - general half plane, $-x - y + 2.8 \geq 0$. (d) 3d - x axis half plane, $\xi_x^* = 1.8$. (e) 3D - y axis half plane, $\xi_y^* = 0.45$. (f) 3D - z axis half plane, $\xi_z^* = 1.35$. (g) 3D - general half plane, $-0.4758x -0.0135y - z + 1.7601 \geq 0$. The red line indicates the boundary of safety constraint.}
    \label{fig:comps}
\end{figure}

\section{Experiments and Results}\label{sec:result}
\subsection{Experimental Setup}\label{sec:exp_setup}
To demonstrate the effectiveness of the proposed JTE Toolbox for joint tolerance estimation of robot arms, we design two sets of experiments. In these experiments, the obstacle is always taken as a half plane. It is not an over simplification by taking half planes. As shown in the convex feasible set algorithm \cite{liu2017convex}~\cite{zhao2020contact}, all obstacles can be bounded by a set of half planes and we only need to stay away from these half planes. Moreover, by \cref{theo_multi}, we can solve the bound for each half plane independently and then take the minumum.

We design two sets of joint tolerance estimation problems: one is a 2DOF robot working in 2-dimensional space; the other is a 6DOF YASKAWA GP50 robot working in 3-dimensional Cartesian space. For all problem sets, we only enforce the constraint on the end effector since the other parts of the robot cannot reach the obstacle. The algorithm is generalizable to any other critical body point. Denote the end-effector position as $FK(x)$ where $FK(\cdot)$ computes forward kinematics. For each problem set, we randomly pick the reference robot configuration, and specify the following half plane constraints: 
\begin{itemize} 
\item For 2-dimensional plane 
     \begin{itemize} 
        \item half plane on x axis, such that $Fk(x)_{x} \leq \xi_x$; 
        \item half plane on y axis, such that $Fk(x)_{y} \leq \xi_y$;
        \item general half plane $O$, such that $d(FK(x),O) \geq 0$.
     \end{itemize}
\item For 3-dimensional Cartesian space
    \begin{itemize} 
        \item half plane on x axis, such that $Fk(x)_{x} \leq \xi_x^*$; 
        \item half plane on y axis, such that $Fk(x)_{y} \leq \xi_y^*$;
        \item half plane on z axis, such that $Fk(x)_{z} \leq \xi_z^*$;
        \item general half plane $O^*$, such that $d(FK(x),O^*) \geq 0$.
     \end{itemize}
\end{itemize}
The parameters $\xi_x, \xi_y, \xi_x^*, \xi_y^*, \xi_z^*$ are scalars, $FK(x)_x$, $FK(x)_y$, $FK(x)_z$ denote the $x,y,z$ coordinates of the end effector, and $d(\cdot)$ denotes a signed distance function to compute the distance between a point and a half plane. The experiment platforms for both settings are illustrated in \cref{platform}.

\begin{figure}
     \centering
    \begin{subfigure}[t]{0.22\textwidth}
        \raisebox{-\height}{\includegraphics[width=\textwidth]{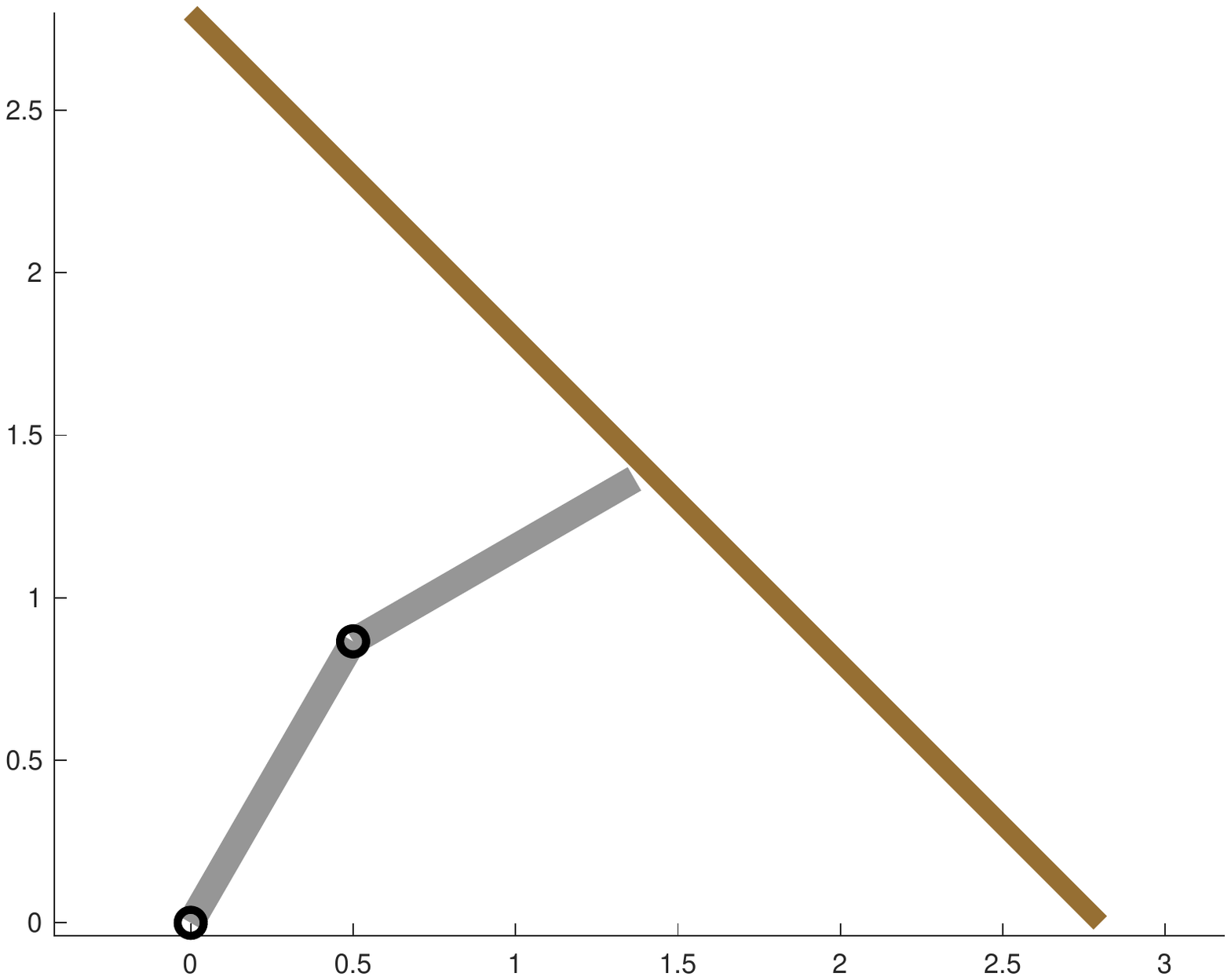}}
        \caption{2-dimensional plane space experiment platform}
    \end{subfigure}
    \begin{subfigure}[t]{0.209\textwidth}
        \raisebox{-\height}{\includegraphics[width=\textwidth]{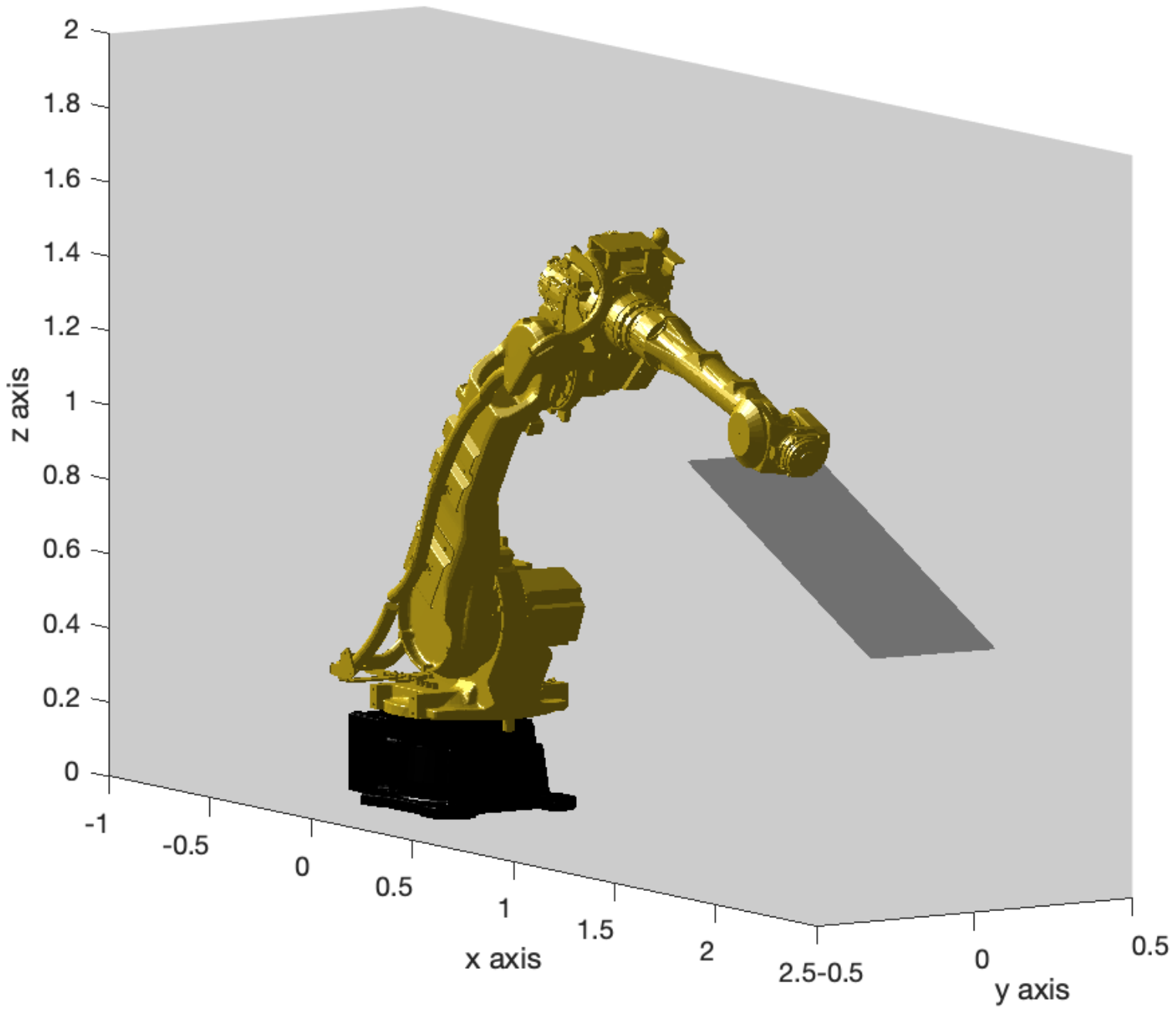}}
        \caption{3-dimensional Cartesian space experiment platform}
    \end{subfigure}
     \caption{Experimental platforms.}
     \label{platform}
\end{figure}

 \begin{table*}[htbp]
 \caption{ JTE Toolbox safe joint tolerance bound optimization performance summary cross two sets of experiments in terms of computation time, the safety constraints violation percentage, and the closest distance (safe distance) from sampled end-effector position to safety boundary.}
\centering
\begin{tabular}{@{}p{0.14\textwidth}*{9}{L{\dimexpr0.12\textwidth-2\tabcolsep\relax}}@{}}
\toprule
& \multicolumn{3}{c}{2-dimensional experiments} &
\multicolumn{4}{c}{3-dimensional experiments}\\
\cmidrule(r{4pt}){2-4} \cmidrule(l){5-8} 
& x-axis plane & y-axis plane & general plane &  x-axis plane & y-axis plane & z-axis plane & general plane\\
\midrule
time (\si{\second}) & 0.5953 & 0.8105 & 0.5940 & 6.804 & 5.028 & 1.5844 & 3.9198 \\
safety violation & 0$\%$ & 0$\%$ & 0$\%$ & 0$\%$ & 0$\%$ & 0$\%$ & 0$\%$ \\
smallest $f(\bm{x})$ (\si{\meter})  & 0.0035 & 0.0012 & 0.0021 & 0.0111 & 0.0067 & 0.0014 & 0.0043 \\
$\lambda$ (\si{\radian})  & 0.0670 & 0.0372 & 0.1145 & 0.0346 & 0.0265  & 0.035 & 0.0302 \\
\bottomrule
\end{tabular}
 \label{summarize}
\end{table*}

\subsection{Results}

For the 2D problem, the lengths of both links of the robot are  $1$m. The reference configuration is $\bm{x}^r = [\frac{\pi}{3},\frac{\pi}{6}]$. The parameters in the safety constraints are 1)  $\xi_x = 1.4560$m, 2)  $\xi_y = 1.4160 m$, and 3) the general half plane $O$ represented by $-x - y + 2.8 \geq 0$. 

For the 3D problem, the physical properties of the robot  follow the standard YASKAWA GP50 definition\footnote{https://www.motoman.com/en-us/products/robots/industrial/assembly-handling/gp-series/gp50}. The reference configuration is $\bm{x}^r = [\frac{\pi}{20},-\frac{\pi}{2},\frac{\pi}{20},\frac{\pi}{20},\frac{\pi}{20},\frac{\pi}{20}]$.  The parameters in the safety constraints are 1)  $\xi_x^* = 1.8 m$, 2) $\xi_y^* = 0.45 m$, 3) $\xi_z^* = 1.35 m$, 4) the general half plane $O^*$ represented by $-0.4758x -0.0135y - z + 1.7601 \geq 0$. All the experiments are performed on the MATLAB 2020 platform with a $2.3 GHz$ Intel Core i5 Processor.

To evaluate the optimality of the computed joint tolerance $\lambda$, we randomly sample $10000$ configurations $\bm{x}$ inside the hypercube $\|\bm{x}-\bm{x}^r\|_\infty\leq\lambda$, then check their distance to the obstacle, \eg $f(\bm{x})$. The samples for the 2D problem and the 3D problem are plotted in \cref{fig:comps}. We can observe that the computed joint tolerance bound is maximized in the sense that the minimum distance from sampled points in the tolerance bound to the obstacle is very close to $0$. Nonetheless, there is no safety violation across all the experiments, which aligns with \cref{theo1}. The computation time, percentage of safety violation, the smallest distance value $f(\bm{x})$ are reported in \Cref{summarize}, as well as the optimized tolerance bound $\lambda$. Note that smaller positive safe distance means smaller gap between the computed $\lambda$ and the ground truth $\lambda$, thus indicating better solution optimality. Statistical results indicate that JTE can find maximum joint tolerance bound with great optimality in real time where the safe distance is bounded at millimeter scale. At the same time, the great joint tolerance bound optimality does not compromise safety where zero safety violation can be observed across all sets of experiments. Note that $\lambda$ of all experiments are less than $0.244$ \si{\radian}, which satisfy the requirement of small angle approximation.


\section{CONCLUSIONS}

This paper presented a sum-of-squares programming method that can efficiently solve for the joint tolerance of robot arms with respect to safety constraints. The proposed method is proved to provide a tight lower bound of the joint tolerance. Numerous numerical studies are conducted and the results demonstrate the effectiveness of our proposed method in terms of computational efficiency and optimality. 
In the future, we will investigate more complex situations with non differentiable safety constraints.

\bibliographystyle{IEEEtran.bst}
\bibliography{reference}

\end{document}